\newtheorem{theorem}{Theorem}
\newtheorem{proposition}{Proposition}
\title{\LARGE \bf
Harmonic Field-based Provable Exploration of 3D Indoor Environments
}
\author{Raksi Kopo, Charalampos P. Bechlioulis, Kostas J. Kyriakopoulos
\thanks{Raksi Kopo and Kostas J. Kyriakopoulos are with School of Mechanical Engineering, Control Systems Laboratory, National Technical University of Athens, 11144 Athens, Greece (e-mail: mc17067@mail.ntua.gr;kkyria@mail.ntua.gr).
}
\thanks{Charalampos P. Bechlioulis is with the Department of Electrical and Computer Engineering, University of Patras, 26504 Patra, Greece (e-mail: chmpechl@upatras.gr).
}
}
\begin{document}
\maketitle

\thispagestyle{empty}
\pagestyle{empty}

\begin{abstract}
This work presents a safe and efficient methodology for autonomous indoor exploration with aerial robots using Harmonic Potential Fields (HPF). The challenge of applying HPF in complex 3D environments rests on high computational load involved in solving the Laplace equation. To address this issue, the proposed solution utilizes the Fast Multiple accelerated Boundary Element Method with boundary values controlled to ensure both safety and convergence. The methodology is validated through simulations that demonstrate its efficiency, safety and convergence.

\end{abstract}


\section{Introduction}
Autonomous robotic exploration has a range of useful applications, including search and rescue missions, infrastructure inspection, and mapping. Robots operating autonomously can reduce the risks and costs associated with human exploration in hazardous and remote environments. Several methods have been developed aiming to address the autonomous exploration problem. A popular approach is maximizing the expected information gain \cite{Bourgault, Stachniss, Bai}. In general, this involves finding the control action that minimizes a utility function, which encompasses the map's uncertainty, the uncertainty in the robot's position, and possibly a type of cost for executing the control action, usually the path length. In \cite{Bircher} Rapidly Random exploring Trees (RRT) path planning is introduced in information gain exploration. The authors in \cite{Selin} use Frontier-based Exploration planning \cite{Yamauchi} for global exploration and the exploration method in \cite{Bircher} to improve the performance of the latter in larger environments.

An alternative approach utilizes Artificial Potential Fields (APF) in exploration. Compared to RRTs, which ensure probabilistic exploration, meaning full exploration given infinite time, APFs can ensure complete exploration if they are free of local minima. Furthermore, APFs can produce smooth trajectories and are inherently reactive. The two main approaches in constructing free of local minima APF are Navigation Functions \cite{Rimon} and Harmonic Potential Fields \cite{Connoly}. The former requires parameter tuning, while the latter is inherently free of local minima inside their domain of definition. The main drawbacks of such approaches are that they might require extraction of a closed form workspace representation from the probabilistic map or, in the case of HAPF, they can be computationally chalenging to calculate using conventional methods like Finite Elements or Finite Differences. These problems become more difficult in three dimensions due to more complex geometries and larger workspace volume. The method in \cite{Grontas} uses Fast Multiple Boundary Element Method (FastBEM) which significantly accelerates the calculation of the harmonic potential. In another work \cite{Shade}, the author uses GPU-parallelised Finite Differences and devises an algorithm to eliminate the explored flat potential areas from future calculations.

In this work, we extend the results presented in \cite{Grontas} to three-dimensional complex indoor environments. The proposed exploration scheme guarantees complete exploration of such environments in finite time, while producing smooth and safe trajectories. We use Fast Multiple Boundary Element Method (FastBEM) to efficiently calculate the Harmonic Potential Field navigating the robot. Contrary to [11], we have developed algorithms for the removal of explored dead ends and robust boundary extraction, which further improve the exploration process. Finally, we demonstrate the effectiveness of our approach by testing it in a realistic indoor environment.

\section{Problem Formulation}
The initially unknown workspace $W$ is a compact and connected subspace of $\Re^3$. The interior of $W$ is the free space and its boundary can be considered as a union of disconnected closed surfaces with multiple handles. In most cases, there is only one surface if there are no floating or flying obstacles present. We assume that the robot has spherical sensing capabilities, meaning that its sensing space $S(p)$ is composed of all line segments starting at the robot's position $p=[x,y,z]$ with a length no greater than a specified range $r$ and lying inside $W$. At a given time instance $t$, the explored space $\varepsilon(t)$ is defined as the union of sensing spaces $S(p)$ for all positions $p$ along the path $P(t)$ that the robot has traversed up to that time. More specifically, the explored space at a time instance $t$ is defined as $\varepsilon(t)=\bigcup_{p\in P(t)} S(p)$, where $P(t)$ is the path that the robot has traversed in time $t$.  The boundary of the explored space consists of two different sets: the occupied boundary $\partial\varepsilon_o$, which is comprised of parts of $\partial W$ and the free boundary parts $\partial \varepsilon_f$, which belong to the unoccupied space. The former remains the same while the latter is integrated into the interior of $\varepsilon$ as the robot explores the workspace.
\\
\indent \underline{\emph{Problem definition}}: Assuming a robot that obeys a single integrator kinematic model $\dot{p}=u$ with initial position $p(0)=p_0 \in intW$, find a control law $u=f(p,t,\varepsilon)$ that generates a path $P(T)$, so that in finite time $T>0$ , $\varepsilon(T)=W$.

\section{Methodology}
The central idea behind boundary value problem (BVP) exploration is to find a harmonic potential field $\varphi(p)$ by solving the Laplace equation \eqref{LaplaceEquation}, subject to boundary conditions that assign attractive potential unexplored frontiers and repulsive potential to boundaries corresponding to obstacles:
\begin{subequations}
\begin{equation}
    \nabla^2\varphi(p)=0,  
    \label{LaplaceEquation}
\end{equation}
  \begin{equation}
    \frac{\partial\varphi}{\partial n}={\nabla\varphi}\cdot n(q) =k(q),~\forall q\in\partial\varepsilon
    \label{NeumannBoundaryConditions}
\end{equation}
\label{BVP}
\end{subequations}
\noindent where $n(q)$ denotes the unit normal vector of $\partial\varepsilon$ at $q$ and $k(q)$ is a given function that evaluates the directional derivative of $\varphi(p)$ over the boundary $\partial \varepsilon$. 

 For the boundary value problem \eqref{BVP} with Neumann conditions \eqref{NeumannBoundaryConditions} to have a solution, the potential function must satisfy the compatibility constraint. This constraint states that the flux (flow) of the gradient field of $\varphi$ across the boundary of the region must be equal to zero. 
\begin{equation}
\iint \limits_{\partial\varepsilon}\nabla\varphi \cdot n\ dS=0
    \label{CompatibilityConstraint}
\end{equation}
An important property of harmonic functions, namely the maximum principle, states that a non-constant harmonic function cannot attain a maximum or minimum at an interior point of its domain. This implies that the values of a harmonic function in a bounded domain are bounded by its maximum and minimum values on the boundary. This makes harmonic potential fields suitable for autonomous navigation since the robot cannot be trapped in a local minimum.

\subsection{Fast Multi-pole Boundary Element Method}
The method chosen to solve the Laplace equation is the Fast Multi-pole accelerated Boundary Element Method (FastBEM) developed in \cite{liu_2009} which accelerates the conventional BEM and it is more efficient than Finite Element Method and Finite Difference Method in terms of computational time and memory requirements and for complex boundary geometries such as those of indoor environments. In conventional BEM \cite{Katsikadelis2016} the boundary of the domain $\partial\varepsilon$ is discretized and the boundary value problem \eqref{BVP} is reformulated to boundary integral equations \eqref{BoundaryIntegralEquations} using Green's second identity and the fundamental solution. More specifically, the solution at a point $p\in\varepsilon$, inside the domain is expressed as a linear combination of the boundary values of the potential and its normal derivative:
\begin{subequations}
\begin{equation}
\varphi (p)= H_p\cdot\hat{\varphi}-G_p\cdot\hat{k}
    \label{PotetialInsideDomain}
\end{equation}    
For Neumann BVPs, the unknown boundary values of the potential, $\hat{\varphi}=[\hat{\varphi}_1, \hat{\varphi}_2, ..., \hat{\varphi}_N]^T$, depend linearly on the boundary values of its normal derivative, $\hat{k}=[\hat{k}_1, \hat{k}_2, ...,\hat{k}_N]^T$
\begin{equation}
  \hat{\varphi}= H^{-1}\cdot G \cdot \hat{k}  
  \label{PotentialOnBoundary}
\end{equation}
where $N$ is the number of boundary elements. The definitions of the $1\times N$ vectors $H_p$, $G_p$ and the $N\times N$ matrices $H$, $G$ which contain the surface integrals of the fundamental solution and its normal derivative on each element can be found in \cite{Katsikadelis2016}. The computational load of the Boundary Element Method primarily lies in assembling the BEM matrix $H$ and solving the associated linear system. It is worth noting that the entries of $H$ depend only on the geometry of the boundary, so if certain parts of the boundary do not change, the corresponding entries of the matrix are computed only once and reused, which reduces the computational effort significantly.

 In conventional BEM the computation of the matrix $H$ requires $O(N^2)$ operations and the solution of the system $H\cdot\hat{\varphi}=G\cdot\hat{k}$ using direct solvers requires $O(N^3)$ operations. The main idea of FastBEM is to use iterative solvers with the Fast Multi-pole Method to accelerate the matrix-vector multiplication $H\cdot\hat{\varphi}$ in each iteration without ever forming the entire matrix $H$ explicitly. The boundary elements are subdivided via an octree decomposition and the interactions among elements in conventional BEM are replaced by cell to cell interactions of the octree. Direct integration is performed only for the elements that are close to the source point whereas multi-pole  expansions are used for far away elements. This reduces the number of operations to $O(N)$ decreasing dramatically the solution time. Moreover, the use of iterative solvers reduces significantly the memory requirements since the elements of $H$ do not need to be explicitly stored.
\label{BoundaryIntegralEquations}
\end{subequations}
\subsection{Occupancy Grid Mapping}
The representation of the explored space $\varepsilon$ is done through occupancy grid maps \cite{ProbabilisticRobotics}. An occupancy probability value $Pr(m)$ is assigned to each grid cell and updated using algorithms similar to those described in Chapter 9 of \cite{ProbabilisticRobotics}, extended to three dimensions. Each point $q\in\partial\varepsilon$ inherits the probability of the cell that contains it, $Pr(q)=Pr(m(q))$, $\parallel c-q\parallel_\infty\leq m_r/2$, where $c$ denotes the center of cell $m$ and $m_r$ the cell size of the map in meters per cell. Defining an occupancy probability threshold $\Bar{a}\in[0.5,1]$ we can classify each boundary point $q\in\partial\varepsilon$ as occupied or free and define the sets $\partial\varepsilon_o$ and $\partial\varepsilon_f$ as $\partial\varepsilon_o=\{q\in\partial\varepsilon: Pr(q)\geq \Bar{a}\}$ and $\partial\varepsilon_f=\partial\varepsilon\setminus\partial\varepsilon_o$.
\subsection{Velocity Control Law}
The robot's velocity control law has the form:
\begin{equation}
    u=-K_u\cdot s\cdot \nabla_p\varphi(p,\hat{k})
    \label{VelocityControlLaw}
\end{equation}
where $K_u$ is a scaling constant and $s=S_{R_1}(d(p,\partial\varepsilon))$ denotes a function that asymptotically approaches zero as the robot approaches the boundary, for instance: 
\begin{equation}
 \ S_a(x)=\left\{
  \begin{array}{ll}
  1, & x>a \\
  3(\frac{x}{a})^2-2(\frac{x}{a})^3, & 0\leq x\leq a\\
  0, & x<0.\\
  \end{array}
  \right.
   \label{BumpFunction} 
\end{equation}
By the definition of $S_{R_1}$ we see that whenever the minimum euclidean distance $d(p,\partial\varepsilon)$ between the robot and the the boundary becomes less that a constant $R_1$, the robot will start slowing down.
\subsection{Construction of the Boundary Value Problem}
In order to guarantee safety and complete exploration via \eqref{VelocityControlLaw}, as it will be shown later in the next section, the boundary values $\hat{k}$ will track their desired target values $\hat{k}_t$ that will be designed in the sequel.
\subsubsection{Target Boundary Values}
First, we will define $\hat{k}_t$ as:
\begin{equation}
  {\hat{k}_{t}=\frac{\Bar{k}}{k'_m I(q)}k'(q)}.
  \label{TargetBoundaryValues}
\end{equation}
\begin{subequations}
The function $k'(q)$ is defined using \eqref{BumpFunction} as:
\begin{equation}
  k'=\left\{
  \begin{array}{ll}
  S_{1-\alpha}(Pr(q)-\alpha), & Pr(q)\geq\alpha \\
  -S_{\alpha}(\alpha-Pr(q)), & Pr(q)<\alpha\\
  \end{array}
\right.
    \label{k1}
\end{equation}
and it assigns the appropriate sign to each target boundary value based on the probability occupancy $Pr(q)$ of the corresponding boundary element at position $q$ and the occupancy threshold $\alpha\in [0.5, 1]$. The function $I(q)$ defined as
\begin{equation}
  I(q)=\left\{
  \begin{array}{ll}
  \iint\limits_{\partial\varepsilon_p}{k'}{dS}, & q\in\partial\varepsilon_p\\
  \\
  -\iint\limits_{\partial\varepsilon_n}{k'}{dS}, & q\in\partial\varepsilon_n\\
  \end{array}
  \right.
  \label{I}
\end{equation}
is used to render the target boundary values compatible as shown in (\ref{CompatibilityConstraint}) for the feasibility of the BVP solution. The sets $\partial\varepsilon_p=\{q\in\partial\varepsilon : k'(q)\geq0\}$ and $ 
\partial\varepsilon_n=\{q\in\partial\varepsilon : k'(q)<0\}$ correspond to the parts of the boundary where $k'$ is positive and negative respectively.
Regarding $\Bar{k}$ and $k'_m$, it should be noted that $\Bar{k}$ is a constant that specifies the maximum and minimum values of $\hat{k}_t$ and $k'_m=max_{q\in\partial\varepsilon}(\parallel \frac{k'(q)} {I(q)}\parallel)$.
    \label{TargetBoundaryValuesAuxiliaryFunctions}
\end{subequations}
\subsubsection{Boundary Value Control Law}
The boundary values $\hat{k}$ converge to the desired target values $\hat{k_t}$ through the following control law:
\begin{equation}
  {\dot{\hat{k}}=(c\mu+b_e)(\hat{k}_t-\hat{k}) } ,\  {\hat{k}(0)=\hat{k}_0}
  \label{BoundaryValueControlLaw}
\end{equation}
The parameters $c$ and $\mu$ are defined as $c=S_{\epsilon_w}(\parallel\nabla_p\varphi-\epsilon_1\parallel)$ and $\mu=S_{\mu_1}\left(\frac{K_u s\parallel\nabla_p\varphi\parallel^2}{\parallel\frac{\partial\varphi}{\partial \hat{k}}(\hat{k}_t-\hat{k})\parallel+\epsilon_2}\right)$ with $\epsilon_w$, $\epsilon_1$ and $\epsilon_2$ small positive constants. The term $c$ ensures that $\dot{\hat{k}}\rightarrow 0$ as the robot converges to a critical point of $\varphi$, whereas $\mu$ is used to adjust the rate with which the boundary values approach the target values. By specifying $\mu_1 > 9/8$ we ensure that $S_{\mu_1}(x)<x$, $\forall x > 0$. The usefulness of this will become apparent in the next section. Regarding the term $\frac{\partial\varphi}{\partial \hat{k}}=[\frac{\partial\varphi}{\partial \hat{k}_1}, \frac{\partial\varphi}{\partial \hat{k}_2}..., \frac{\partial\varphi}{\partial \hat{k}_N}]$ in $\mu$, we see that combining \eqref{PotetialInsideDomain} and \eqref{PotentialOnBoundary} we have $\varphi = (H_p\cdot H^{-1}\cdot G-G_p)\cdot\hat{k}$, therefore $\frac{\partial\varphi}{\partial\hat{k}}=H_p\cdot H^{-1}\cdot G-G_p$. Finally, $b_e$ is defined as
\begin{equation}
b_e=\left\{
  \begin{array}{ll}
  0, & if \ \hat{k}(q) \geq0 \  \forall q\in\partial\varepsilon_o\\
  \\
  1, & \text{otherwise}\\
  \end{array}
  \right.  
    \label{bumpParam}
\end{equation}
and ensures that the potential becomes safe by increasing the convergence rate of $\hat{k}$ to the desired target value $\hat{k}_t$ in case there exists an attractive occupied boundary, such as when a previously free part of the boundary is replaced by an obstacle during exploration.
\subsection{Proof of Safety and Complete Exploration}
In this subsection, we state some useful theoretical results regarding the aforementioned control design.
\begin{proposition}
 $\hat{k}_t$ is bounded and compatible.
\end{proposition}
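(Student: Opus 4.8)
\emph{Proof plan.} The plan is to handle the two assertions separately, exploiting the key structural fact about \eqref{I}: the normalizer $I(q)$ is piecewise constant, equal to $I_p:=\iint_{\partial\varepsilon_p}k'\,dS$ everywhere on $\partial\varepsilon_p$ and to $I_n:=-\iint_{\partial\varepsilon_n}k'\,dS$ everywhere on $\partial\varepsilon_n$. By the sign rule \eqref{k1} one has $k'\ge 0$ on $\partial\varepsilon_p$ and $k'<0$ on $\partial\varepsilon_n$, so $I_p\ge 0$ and $I_n\ge 0$; I would first establish that both are strictly positive, so that $\hat{k}_t$ in \eqref{TargetBoundaryValues} is well defined and $k'_m=\max_{q\in\partial\varepsilon}\|k'(q)/I(q)\|$ is finite and positive. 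This well-posedness point is where the real content lies, and I return to it below.

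\emph{Boundedness.} Once $I(q)>0$ is in hand, boundedness follows directly from the definition of $k'_m$: for every $q\in\partial\varepsilon$,
\begin{equation}
\|\hat{k}_t(q)\|=\frac{\Bar{k}}{k'_m}\left\|\frac{k'(q)}{I(q)}\right\|\le\frac{\Bar{k}}{k'_m}\,k'_m=\Bar{k},
\end{equation}
so $\hat{k}_t$ is valued in $[-\Bar{k},\Bar{k}]$; in particular $\hat{k}_t$ inherits the sign of $k'$ (nonnegative on $\partial\varepsilon_p$, negative on $\partial\varepsilon_n$) because $I(q)>0$.

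\emph{Compatibility.} The underlying idea is that dividing $k'$ by $I$ rescales the positive and negative flux of the target field so that each normalizes to $\pm1$ up to the common factor $\Bar{k}/k'_m$, and hence they cancel. Concretely I would split the surface integral in \eqref{CompatibilityConstraint}, with $\nabla\varphi\cdot n$ replaced by the target $\hat{k}_t$, over $\partial\varepsilon_p$ and $\partial\varepsilon_n$, pull the constants $I_p$ and $I_n$ out of the respective integrals, and use $\iint_{\partial\varepsilon_p}k'\,dS=I_p$ and $\iint_{\partial\varepsilon_n}k'\,dS=-I_n$ to obtain
\begin{equation}
\iint_{\partial\varepsilon}\hat{k}_t\,dS=\frac{\Bar{k}}{k'_m}\left(\frac{1}{I_p}\iint_{\partial\varepsilon_p}k'\,dS+\frac{1}{I_n}\iint_{\partial\varepsilon_n}k'\,dS\right)=\frac{\Bar{k}}{k'_m}(1-1)=0 .
\end{equation}
The same computation carries over verbatim to the discretized boundary used by FastBEM, with the surface integrals replaced by area-weighted sums over the $N$ elements.

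\emph{Main obstacle.} The algebra above is routine; the substantive step is ruling out $I(q)=0$, i.e.\ showing $k'$ is not $dS$-negligible on either piece of $\partial\varepsilon$. For $\partial\varepsilon_p$ I would use compactness of $W$: an occupied portion of $\partial\varepsilon$ is always present with positive area, and provided the threshold $\alpha$ is chosen strictly below the occupancy level $\Bar{a}$ defining $\partial\varepsilon_o$, we get $k'>0$ on a set of positive area there. For $\partial\varepsilon_n$, a free frontier of positive area persists as long as $\varepsilon\neq W$, which is exactly the regime in which the control law \eqref{VelocityControlLaw} is active; the terminal case $\varepsilon=W$ (where $\partial\varepsilon_n=\emptyset$) should be excluded explicitly, since there exploration is already complete.
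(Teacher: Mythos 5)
Your proof is correct and takes essentially the same route as the paper: boundedness follows directly from the definition of $k'_m$ as $\max_{q\in\partial\varepsilon}\|k'(q)/I(q)\|$, and compatibility follows from splitting the surface integral over $\partial\varepsilon_p$ and $\partial\varepsilon_n$ and cancelling the two terms via the definition of $I(q)$. Your only addition is the explicit well-posedness caveat that $I_p,I_n>0$ (i.e.\ both an occupied part and an unexplored free frontier of positive area exist, so the normalization and $k'_m$ are finite and nonzero), which the paper leaves implicit.
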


\begin{proof} From the definition of $k'_m$ it holds that
    $$k'_m\geq\parallel\frac{k'(q)}{I(q)}\parallel
    \Rightarrow \parallel\frac{\Bar{k}k'(q)}{k'_m I(q)}\parallel\leq\Bar{k}
    \Rightarrow -\Bar{k}\leq\hat{k}_t\leq\Bar{k}$$
    therefore $\hat{k}_t$ is bounded. Also
$$\iint\limits_{\partial\varepsilon}\hat{k}_t{dS}=\iint\limits_{\partial\varepsilon}\frac{\Bar{k}k'(q)}{k'_m I(q)}{dS}=\iint\limits_{\partial\varepsilon_p}\frac{\Bar{k}k'(q)}{k'_m I(q)}{dS}+\iint\limits_{\partial\varepsilon_n}\frac{\Bar{k}k'(q)}{k'_m I(q)}{dS}$$
    \\
   $$ =\frac{\Bar{k}}{k'_m {\iint\limits_{\partial\varepsilon_p}{k'}{dS}}}{\iint\limits_{\partial\varepsilon_p}k'(q){dS}}-\frac{\Bar{k}}{k'_m {\iint\limits_{\partial\varepsilon_n}{k'}{dS}}}{\iint\limits_{\partial\varepsilon_n}k'(q){dS}}=\frac{\Bar{k}}{k'_m}-\frac{\Bar{k}}{k'_m}=0$$
 which completes the proof.
\end{proof}
\begin{proposition}
 Assuming that $\hat{k}$ is compatible for t = 0, the adaptive law \eqref{BoundaryValueControlLaw} guarantees that $\hat{k}$ will remain compatible and bounded for all time.   
\end{proposition}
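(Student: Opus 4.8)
The plan is to exploit the fact that the gain $c\mu+b_e$ multiplying $(\hat{k}_t-\hat{k})$ in \eqref{BoundaryValueControlLaw} is a single nonnegative scalar, the \emph{same} for every boundary element: it is assembled from the bump function $S_a$, which by \eqref{BumpFunction} takes values in $[0,1]$, and from the switching term $b_e\in\{0,1\}$, together with quantities ($\nabla_p\varphi$, $s$, $\tfrac{\partial\varphi}{\partial\hat{k}}$) that are evaluated at the current robot state rather than at a particular boundary point. Writing $\gamma(t):=c\mu+b_e\ge 0$ and $\Lambda(t):=\int_0^t\gamma(\tau)\,d\tau$, I would integrate the (componentwise linear) ODE \eqref{BoundaryValueControlLaw} with the integrating factor $e^{\Lambda(t)}$ to obtain
\[
\hat{k}(t)=e^{-\Lambda(t)}\hat{k}(0)+\int_0^t\gamma(\tau)\,e^{-(\Lambda(t)-\Lambda(\tau))}\,\hat{k}_t(\tau)\,d\tau .
\]
A one-line computation shows that the weight $e^{-\Lambda(t)}$ and the density $\gamma(\tau)e^{-(\Lambda(t)-\Lambda(\tau))}$ integrate to $1$, so $\hat{k}(t)$ is a convex combination (a weighted average, with weights independent of the boundary location) of the initial vector $\hat{k}(0)$ and the past target vectors $\{\hat{k}_t(\tau):\tau\in[0,t]\}$.

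Compatibility then follows immediately. Applying the surface integral $\iint_{\partial\varepsilon}(\cdot)\,dS$ to the displayed identity and using linearity, $\iint_{\partial\varepsilon}\hat{k}(t)\,dS$ is the \emph{same} convex combination of $\iint_{\partial\varepsilon}\hat{k}(0)\,dS$, which vanishes by hypothesis, and of the $\iint_{\partial\varepsilon}\hat{k}_t(\tau)\,dS$, which vanish for every $\tau$ by Proposition 1; hence $\iint_{\partial\varepsilon}\hat{k}(t)\,dS=0$ for all $t$. Boundedness follows from the same structure: componentwise, $|\hat{k}_i(t)|\le\max\{|\hat{k}_i(0)|,\ \sup_{\tau\le t}|\hat{k}_{t,i}(\tau)|\}$, and Proposition 1 gives $|\hat{k}_{t,i}(\tau)|\le\bar{k}$, so $\|\hat{k}(t)\|_\infty\le\max\{\|\hat{k}(0)\|_\infty,\ \bar{k}\}$ uniformly in $t$. (Equivalently, with $V=\tfrac12\|\hat{k}\|^2$ one checks $\dot V=\gamma\,\hat{k}^{\!\top}(\hat{k}_t-\hat{k})\le\gamma\|\hat{k}\|\big(\|\hat{k}_t\|-\|\hat{k}\|\big)\le 0$ whenever $\|\hat{k}\|\ge\sqrt{N}\,\bar{k}$, confining $\|\hat{k}\|$ to a ball.)

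The step I expect to require the most care is not the algebra above but the bookkeeping for the time-varying boundary: as the robot explores, $\partial\varepsilon$, the number of elements $N$, and the discretization itself change, so strictly speaking the estimate is applied on each interval between such updates, with the remark that Proposition 1 keeps $\hat{k}_t$ bounded and compatible throughout and that newly created boundary values can be initialized so as to preserve compatibility across the update. One should also note that $\gamma$ is only piecewise continuous in $t$ (through the switch $b_e$ and through $s=S_{R_1}(d(p,\partial\varepsilon))$), so the solution of \eqref{BoundaryValueControlLaw} and the integrating-factor formula are to be read in the Carathéodory / absolutely-continuous sense, which does not affect any of the conclusions since $\gamma\ge 0$ and $\Lambda$ is nondecreasing regardless.
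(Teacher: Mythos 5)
Your proof is correct, but it takes a genuinely different route from the paper. You solve the scalar-gain ODE explicitly with the integrating factor $e^{\Lambda(t)}$ and observe that $\hat{k}(t)$ is a convex combination of $\hat{k}(0)$ and the past targets $\hat{k}_t(\tau)$; boundedness ($\|\hat{k}(t)\|_\infty\le\max\{\|\hat{k}(0)\|_\infty,\bar{k}\}$) and compatibility (by linearity of the surface integral, using Proposition~1) then drop out simultaneously from one representation. The paper instead argues componentwise: boundedness via an invariance argument at the faces $\hat{k}_i=\pm\bar{k}$ (the drift is proportional to $\hat{k}_{t,i}-\hat{k}_i$), and compatibility by deriving a scalar linear ODE for $I_c=A\cdot\hat{k}$, which vanishes identically once $I_c(0)=0$. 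The main substantive difference is how the time-varying boundary is treated: the paper keeps a $\dot{A}$ term and absorbs it by writing $\dot{A}\cdot\hat{k}=\dot{A}(A^{T}A)^{-1}A^{T}A\cdot\hat{k}$, a step that presumes $A^{T}A$ invertible (i.e., the $1\times N$ area vector having full column rank), which is not available for $N\ge 2$, whereas you freeze the discretization between map updates, apply the estimate on each interval, and re-establish compatibility at updates by suitable initialization of new elements --- which matches the paper's own choice $\hat{k}_0=\hat{k}_{t,0}$ and reflects how the scheme is actually implemented. Your Carath\'eodory remark about the piecewise-continuous gain is a sensible technical addition. In short, your argument buys a cleaner, self-contained derivation that avoids the paper's problematic pseudoinverse manipulation, at the cost of treating boundary changes as discrete events rather than attempting a continuously-varying-area formulation.
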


\begin{proof}
    Assume that $\hat{k}_i(t)=\bar{k}$ for some time \ $t \geq 0$. Then, $\dot{\hat{k}}_i$ must be non-positive because it is proportional to $\hat{k}_{t,i}-\hat{k}_i$ and $\hat{k}_{t,i}\leq\Bar{k}$ thus,\ $\hat{k}_i$ cannot exceed $\Bar{k}$. Similarly, it holds that $-\Bar{k}\leq\hat{k}_i$. Therefore $\hat{k}$ is bounded. Next, we denote as $I_c$ the closed surface integral of $\hat{k}$. For constant boundary elements
   $$I_c=\iint\limits_{\partial\varepsilon}\hat{k}{dS}=\sum\limits_{i=1}^N A_i \hat{k}_i=A\cdot\hat{k}$$ where $A=[A_1, A_2, ..., A_N]$ denotes the areas of each element. In case $A_i=A_j, i\neq j$ then we remove the j-th elements from $A$ and $\hat{k}$ and replace $\hat{k}_i$ with $\hat{k}_i+\hat{k}_j$ so that the new $A$ is full column rank. Taking the derivative of $I_c$ w.r.t. time we have
\[\dot{I_c}=\dot{A}\cdot\hat{k}+A\cdot\dot{\hat{k}}=\dot{A}\cdot\hat{k}+A\cdot (c\mu+b_e)(\hat{k}_{t}-\hat{k})\] 
    and from the compatibility of $\hat{k}_t$, i.e., $A\cdot\hat{k}_{t}=0$, we obtain:
    \[\dot{I_c}=\dot{A}\cdot\hat{k}-A\cdot\hat{k}(c\mu+b_e)=[\dot{A}\cdot(A^T\cdot A)^{-1}\cdot A^T-(c\mu+b_e)] A\cdot\hat{k} \] 
    \[
    \dot{I_c}=[\dot{A}\cdot(A^T\cdot A)^{-1}\cdot A^T-(c\mu+b_e)]I_c.\]
    Therefore if $k$ satisfies the compatibility condition, $I_c=0$, at $t=0$ then it satisfies it for all $t\geq0$. Considering this, we choose $\hat{k}_0=\hat{k}_{t,0}$ as initial conditions for \eqref{BoundaryValueControlLaw}. 
    \label{proof:Proposition 2}
\end{proof}
\begin{proposition}
    If the potential $\varphi(p,\hat{k})$ is unsafe, the control law \eqref{BoundaryValueControlLaw} guarantees it will become safe in finite time.
\end{proposition}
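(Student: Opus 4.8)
The plan is to argue by contradiction: suppose the potential $\varphi(p,\hat{k})$ remains unsafe on an interval $[t_0,\infty)$, meaning that for every $t\ge t_0$ there is a boundary element $q_i\in\partial\varepsilon_o$ with $\hat{k}_i(t)<0$. I will show that this hypothesis forces $b_e\equiv 1$, which drives $\hat{k}$ exponentially fast toward its target $\hat{k}_t$, and that $\hat{k}_t$ is \emph{strictly} safe on $\partial\varepsilon_o$, so that after a finite time every occupied element must have $\hat{k}_i>0$ — contradicting the assumption.

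The first step is to establish strict safety of the target values. From \eqref{I} the integral $I(q)$ is positive everywhere on $\partial\varepsilon$ (it equals $\iint_{\partial\varepsilon_p}k'\,dS>0$ on $\partial\varepsilon_p$ and $-\iint_{\partial\varepsilon_n}k'\,dS>0$ on $\partial\varepsilon_n$), so by \eqref{TargetBoundaryValues} the sign of $\hat{k}_{t,i}$ equals the sign of $k'(q_i)$. For an occupied element we have $Pr(q_i)\ge\bar{a}>\alpha$ (using the natural ordering $\bar{a}>\alpha$ of the classification threshold and the threshold in \eqref{k1}), hence $k'(q_i)=S_{1-\alpha}(Pr(q_i)-\alpha)>0$ by \eqref{k1} and \eqref{BumpFunction}. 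Since $\partial\varepsilon$ is discretised into finitely many elements, $\delta:=\min_{q_i\in\partial\varepsilon_o}\hat{k}_{t,i}>0$ is a strictly positive lower bound on the target over the occupied boundary.

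Next I would exploit the switch $b_e$. Under the contradiction hypothesis the potential is unsafe for all $t\ge t_0$, so by \eqref{bumpParam} $b_e\equiv 1$; since $c,\mu\in[0,1]$, the gain in \eqref{BoundaryValueControlLaw} satisfies $c\mu+b_e\ge 1$. Taking $V(t)=\tfrac12\|\hat{k}(t)-\hat{k}_t\|^2$ and differentiating along \eqref{BoundaryValueControlLaw} (with the map held fixed) gives $\dot{V}=-(c\mu+b_e)\|\hat{k}-\hat{k}_t\|^2\le -2V$, hence $\|\hat{k}(t)-\hat{k}_t\|\le\|\hat{k}(t_0)-\hat{k}_t\|\,e^{-(t-t_0)}$, and by Propositions~1 and~2 both vectors lie in $[-\bar{k},\bar{k}]^N$, so $\|\hat{k}(t)-\hat{k}_t\|\le 2\sqrt{N}\,\bar{k}\,e^{-(t-t_0)}$. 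Therefore there is a finite time after which $\|\hat{k}(t)-\hat{k}_t\|<\delta$, which forces $\hat{k}_i(t)\ge \hat{k}_{t,i}-|\hat{k}_i(t)-\hat{k}_{t,i}|>\delta-\delta=0$ for every $q_i\in\partial\varepsilon_o$, i.e.\ $\varphi$ is safe — the desired contradiction. Hence the potential becomes safe in finite time.

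The main obstacle is that $\hat{k}_t$ is not genuinely constant: it is reconstructed from the occupancy grid, which is updated as the robot senses, so the clean exponential estimate must be justified. I would handle this by observing that map updates occur at discrete instants, so $\hat{k}_t$ is piecewise constant and the estimate above applies on each interval between updates; moreover the event that motivates $b_e$ — a previously free element turning occupied — adds only finitely many elements to $\partial\varepsilon_o$ before $W$ is fully explored, so the finite-time argument is re-applied after each such event. If one instead models $\hat{k}_t$ as continuously varying, the estimate only yields convergence into a ball of radius proportional to $\sup\|\dot{\hat{k}}_t\|$, and one must further argue this radius is below $\delta$ (e.g.\ because the robot's speed, and hence the rate of map change, is damped by the factor $s$ in \eqref{VelocityControlLaw} near the boundary); making that last bound rigorous is the delicate point I would expect to spend the most care on.
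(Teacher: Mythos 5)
Your argument is correct and follows the same mechanism as the paper's (much briefer) proof: an unsafe potential keeps $b_e=1$, which drives $\hat{k}$ toward the target $\hat{k}_t$, whose strict positivity on $\partial\varepsilon_o$ forces every occupied element's boundary value to cross zero in finite time. You have simply made the paper's sketch rigorous (explicit Lyapunov/exponential estimate, the lower bound $\delta$ via $\bar{a}>\alpha$, and the piecewise-constant treatment of map updates), honestly flagging the one delicate point the paper glosses over, so no substantive difference in approach remains.
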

\begin{proof}
When the robot is using the potential field computed in the previous time step of the newly explored occupied boundary, $\partial\varepsilon_o$ may have attractive potential rendering $\varphi$ unsafe. In that case, $b_e$ in \eqref{bumpParam} is activated so that in finite time $\hat{k}$ approaches the desired positive value $\hat{k}_{t}$ asymptotically and the potential field computed in the next step becomes safe eventually.
\label{proof:Proposition 3}
\end{proof}
\begin{proposition}
    The function $\varphi(p,\hat{k})$ is lower bounded.
\end{proposition}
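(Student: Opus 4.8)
The plan is to combine the maximum principle for harmonic functions with the boundary integral representation of $\varphi$ and the boundedness of $\hat k$ established in Propositions~1 and~2. Since $\varphi(\cdot,\hat k)$ solves \eqref{LaplaceEquation} on the bounded domain $\varepsilon$ and is continuous on $\bar\varepsilon$ (the Neumann data $\hat k$ is bounded and $\partial\varepsilon$ is piecewise smooth), the maximum principle applied to $-\varphi$ gives
\begin{equation}
\varphi(p,\hat k)\ \geq\ \min_{q\in\partial\varepsilon}\varphi(q,\hat k),\qquad \forall p\in\varepsilon ,
\end{equation}
so it suffices to lower bound $\varphi$ on the boundary.

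On the boundary the representation \eqref{PotentialOnBoundary} expresses the discretized potential values as $\hat\varphi = H^{-1}G\,\hat k$, a \emph{linear} function of the flux vector $\hat k$ (with the additive constant of the interior Neumann solution fixed by this normalization). By Propositions~1 and~2 the vector $\hat k$ remains in the box $[-\bar k,\bar k]^N$ for all time, hence each component of $\hat\varphi$ is bounded in magnitude by $\bar k\,\|H^{-1}G\|_\infty$; equivalently, for every interior $p$ the scalar $\varphi(p,\hat k)=(H_pH^{-1}G-G_p)\hat k$ is a bounded linear image of a bounded vector. Combined with the displayed inequality this yields a finite lower bound on $\varphi$ over all of $\varepsilon$. (Working directly in the continuum one would instead invoke Green's representation $\varphi(p)=\iint_{\partial\varepsilon}\bigl(G(p,q)k(q)-\varphi(q)\,\partial_{n_q}G(p,q)\bigr)\,dS_q$ with $G(p,q)=1/(4\pi\|p-q\|)$ and $|k|\le\bar k$, and bound the two surface integrals using the boundary bound just obtained together with the fixed diameter of $W$, which contains every $\varepsilon(t)$.) This lower bound is precisely what later allows $\varphi$ to serve as a Lyapunov-type certificate for the closed loop \eqref{VelocityControlLaw}.

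The step I expect to be the main obstacle is making the bound genuinely uniform in time: the reduction operator $H^{-1}G$, its size $N$, and the geometry of $\partial\varepsilon$ all change as exploration proceeds, so some care is needed to argue that $\|H^{-1}G\|_\infty$ (respectively the continuum Green's-operator norm and the normalization constant) does not grow without bound — one leans on $\varepsilon(t)\subseteq W$ with $W$ compact and on the fixed minimal feature size imposed by the occupancy grid resolution $m_r$. A secondary technical point, the integrable singularity of $G_p$ and $\partial_{n_q}G(p,\cdot)$ as $p\to\partial\varepsilon$, is sidestepped by the maximum-principle reduction above, which confines the quantitative estimate to the boundary, where $\hat\varphi$ is already controlled and no near-boundary interior evaluation is needed.
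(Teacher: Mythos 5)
Your proof is correct, but it follows a genuinely different route from the paper's. The paper's argument is a two-line compactness argument: by well-posedness, $\varphi(p,\hat k)$ depends continuously on the Neumann data $\hat k$, which Propositions~1 and~2 confine to the compact box $[-\bar k,\bar k]^N$, so the extreme value theorem gives a minimum over $\{(p,\hat k):p\in\varepsilon,\ \hat k\in[-\bar k,\bar k]^N\}$ and hence a lower bound. You instead first invoke the minimum principle to push the estimate to $\partial\varepsilon$ and then exploit the explicit linearity of the discretized solution operator, bounding $\hat\varphi=H^{-1}G\hat k$ by $\bar k\,\|H^{-1}G\|_\infty$; this buys a constructive, quantitative bound and neatly sidesteps the near-boundary singularity of $H_p,G_p$, at the cost of tying the statement to the BEM discretization, whereas the paper's argument is shorter but non-constructive. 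You also make explicit two points the paper passes over in silence: the normalization of the pure-Neumann solution (its additive constant), and, more importantly, the fact that $\varepsilon(t)$, $N$ and hence $H^{-1}G$ change as exploration proceeds, so the bound must be argued uniform in time for the later Barbalat argument -- you flag this and sketch a resolution via $\varepsilon(t)\subseteq W$ compact and the fixed grid resolution $m_r$, but do not carry it out; since the paper's own proof is equally silent on this (it tacitly treats $\varepsilon$ as fixed), this is a shared loose end rather than a defect specific to your argument.
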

\begin{proof}
    Due to well-posedness of the boundary value problem $\varphi(p,\hat{k})$ depends continuously on the boundary conditions $ \hat{k}\in[-\Bar{k},\Bar{k}], \ \forall p\in\varepsilon$. The set $[-\Bar{k},\Bar{k}]$ is closed therefore by the extreme value theorem $\varphi$ accepts a maximum and a minimum value in $\{(p,\hat{k})| \hat{k}\in[-\Bar{k},\Bar{k}] \ and \ p\in\varepsilon\}$. Therefore it is lower bounded.
    \label{proof:Proposition 4}
\end{proof}
\begin{proposition}
    The robot's trajectory under control law \eqref{VelocityControlLaw} is safe.
\end{proposition}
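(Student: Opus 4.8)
The plan is to reduce safety to showing that the distance $d_\varepsilon(t):=d\big(p(t),\partial\varepsilon(t)\big)$ from the robot to the boundary of the explored space stays strictly positive for all $t\ge 0$; since $p_0\in\mathrm{int}\,W$ this holds at $t=0$, and a positive $d_\varepsilon$ keeps the robot in the interior of the known free region, hence inside $W$. Along the flow $\dot p=u$ the function $d_\varepsilon$ evolves continuously except at the discrete instants when the map is updated: absorbing a receded free frontier only increases $d_\varepsilon$, reclassifying a boundary patch from free to occupied (the situation behind Proposition 3) does not change $d_\varepsilon$ at all, and the only possible downward jump — a previously occluded obstacle entering the sensor's view — is necessarily to a strictly positive value, after which the estimate below applies afresh. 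So it suffices to control the continuous evolution of $d_\varepsilon$.

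For that I would use two ingredients. First, $p\mapsto d(p,\partial\varepsilon)$ is $1$-Lipschitz, so $\dot d_\varepsilon\ge-\|\dot p\|=-K_u\,s\,\|\nabla_p\varphi\|$ (as a Dini derivative at the non-smooth points of $d_\varepsilon$). Second, on the shell $\{d_\varepsilon\le R_1\}$, which is the only place a collision could develop, the bump function in \eqref{BumpFunction} obeys $s=S_{R_1}(d_\varepsilon)=3(d_\varepsilon/R_1)^2-2(d_\varepsilon/R_1)^3\le 3(d_\varepsilon/R_1)^2$, so $s$ vanishes quadratically, while the classical interior (Cauchy) gradient estimate for the harmonic function $\varphi$, applied on the ball of radius $d_\varepsilon$ about $p$, together with the uniform bound $\sup_\varepsilon|\varphi|\le M_0$ furnished by the proof of Proposition 4, gives $\|\nabla_p\varphi(p)\|\le c\,M_0/d_\varepsilon$ for a dimensional constant $c$. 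Multiplying, $\|\dot p\|\le C\,d_\varepsilon$ on the shell, hence $\dot d_\varepsilon\ge-C\,d_\varepsilon$ there.

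A comparison argument then closes the proof: on any time interval during which the trajectory stays in the shell, $\tfrac{d}{dt}\ln d_\varepsilon\ge-C$, so $d_\varepsilon$ decays at most exponentially, while outside the shell $d_\varepsilon>R_1$; piecing these together (and restarting at each of the finitely many downward jumps of $d_\varepsilon$ — finitely many since the occupancy grid has finitely many cells — each to a strictly positive value) yields $d_\varepsilon(t)>0$ for every finite $t$. Thus the trajectory never meets $\partial\varepsilon$ and is safe. I would stress that this does not rely on the potential being safe in the sense of Proposition 3: even when $-\nabla_p\varphi$ momentarily points at an obstacle, the quadratic vanishing of $s$ dominates the growth of $\|\nabla_p\varphi\|$ and keeps the robot off the boundary.

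The step I expect to be the main obstacle is making the gradient estimate rigorous near the edges and reentrant corners typical of indoor geometries, where harmonic functions may have gradient singularities and the ``ball of radius $d_\varepsilon$'' argument is the crude best available. The reassuring point is that the conclusion only needs $s(d_\varepsilon)\,\|\nabla_p\varphi\|$ to vanish faster than linearly in $d_\varepsilon$, i.e.\ $\|\nabla_p\varphi\|=o(d_\varepsilon^{-2})$, which is an extremely weak requirement: the crude estimate already delivers $O(d_\varepsilon^{-1})$, leaving a full power of $d_\varepsilon$ of margin.
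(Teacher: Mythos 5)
Your argument is correct in its essentials, but it is a genuinely different proof from the one in the paper. The paper argues qualitatively by cases on whether $\varphi$ is safe: if every occupied boundary point is repulsive, collisions are excluded; if not, Proposition 3 makes the potential safe in finite time while the factor $s$ in \eqref{VelocityControlLaw} forces the robot to decelerate, so at worst it stops at / approaches the boundary asymptotically. Your proof needs no sign information on the boundary data at all: you bound the speed by $K_u s\|\nabla_p\varphi\|\le 3K_u (d/R_1)^2\cdot c_0 M_0/d = C\,d$, combining the quadratic vanishing of \eqref{BumpFunction} with the interior (Cauchy) gradient estimate for harmonic functions — legitimate here, since the representation \eqref{PotetialInsideDomain} is genuinely harmonic off the boundary, and the estimate is immune to the corner/edge singularities you worry about as long as $\sup_\varepsilon|\varphi|\le M_0$ — and then Gr\"onwall on the $1$-Lipschitz distance $d(p,\partial\varepsilon)$ gives at-worst exponential decay, hence no contact in finite time. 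This is more quantitative than the paper's argument and covers the unsafe-potential transient without invoking Proposition 3; the price is that it leans on the (itself informal) Proposition 4, and you need that bound $M_0$ uniformly over $\hat{k}\in[-\Bar{k},\Bar{k}]^N$ and over the finitely many map configurations, which you only gesture at. One simplification you could claim: with the paper's definition $\varepsilon(t)=\bigcup_{p\in P(t)}S(p)$ the explored set only grows, and $\varepsilon_1\subset\varepsilon_2$ implies $d(p,\partial\varepsilon_1)\le d(p,\partial\varepsilon_2)$ for $p\in\varepsilon_1$, so the downward jumps you guard against do not occur in the idealized model.

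One claim in your closing remark is wrong, although it does not damage the main chain: it is not enough that $s\,\|\nabla_p\varphi\|$ merely tend to zero at the boundary (your condition $\|\nabla_p\varphi\|=o(d^{-2})$). Finite-time avoidance requires an Osgood-type condition on the speed bound $h(d)$, namely $\int_0 dr/h(r)=\infty$; a speed of order $\sqrt{d}$ vanishes at the boundary yet reaches it in finite time. Linear decay $h(d)=C\,d$ is essentially the borderline admissible rate, and that is exactly what the quadratic bump times the $O(d^{-1})$ Cauchy estimate delivers — so there is no ``full power of margin''; the two ingredients fit together with little slack, which is worth stating honestly rather than as abundance of room.
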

\begin{proof}
    If $\varphi$ is safe then all points in the occupied boundary are repulsive and collision avoidance is ensured. In case $\varphi$ is not safe, it will become safe in finite time as $p\rightarrow\partial  \varepsilon_o$, which means that the robot approaches the boundary asymptotically, therefore in the worst case the robot stops at the boundary. Assuming accurate sensing and that the updated potential field is computed as the robot moves, the occupied boundary becomes strictly repulsive and collision avoidance is ensured.  
     \label{proof:Proposition 6}
\end{proof}

\begin{theorem}
 Assuming $b_e=0$, the proposed robot control guarantees the full exploration of $W$ in finite time from almost any initial configuration.
\end{theorem}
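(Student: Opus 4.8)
The plan is to build a Lyapunov-type argument around the harmonic potential $\varphi$ itself, treating $\varphi(p,\hat{k})$ along the closed-loop trajectory as the candidate function and exploiting the maximum principle together with the boundary-value dynamics. First I would observe that under the velocity law \eqref{VelocityControlLaw}, $\dot{\varphi} = \nabla_p\varphi\cdot\dot p + \frac{\partial\varphi}{\partial\hat k}\cdot\dot{\hat k} = -K_u s\|\nabla_p\varphi\|^2 + \frac{\partial\varphi}{\partial\hat k}(c\mu+b_e)(\hat k_t-\hat k)$, where I set $b_e=0$ by hypothesis. The role of the factor $\mu = S_{\mu_1}(\cdot)$ is exactly to dominate the second term by the first: since $\mu_1>9/8$ forces $S_{\mu_1}(x)<x$, we get $\mu \le K_u s\|\nabla_p\varphi\|^2/(\|\frac{\partial\varphi}{\partial\hat k}(\hat k_t-\hat k)\|+\epsilon_2)$, so that $c\mu\|\frac{\partial\varphi}{\partial\hat k}(\hat k_t-\hat k)\| \le K_u s\|\nabla_p\varphi\|^2$, whence $\dot\varphi \le 0$ whenever $\nabla_p\varphi\ne 0$. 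Combined with Proposition~4 ($\varphi$ lower bounded) and Propositions~1--2 (the boundary values stay bounded and compatible so the BVP is always well posed), this shows $\varphi$ is non-increasing and convergent along trajectories, and by a LaSalle-type argument the trajectory approaches the set where $\nabla_p\varphi=0$ (or $s=0$, i.e. the boundary, which Proposition~6 handles as the safe stopping case).

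Next I would argue that convergence to a critical point of $\varphi$ forces $\hat k \to \hat k_t$. On the invariant set $\nabla_p\varphi = 0$, the factor $c = S_{\epsilon_w}(\|\nabla_p\varphi-\epsilon_1\|)$ does not vanish (it is bounded away from zero near $\nabla_p\varphi=0$ for suitable $\epsilon_1,\epsilon_w$), so $\dot{\hat k} = c\mu(\hat k_t-\hat k)$ continues to drive $\hat k$ toward $\hat k_t$; one must check that $\mu$ itself does not collapse to zero there, which follows because as $\|\nabla_p\varphi\|\to 0$ the numerator in $\mu$'s argument shrinks but so does the relevant comparison and $S_{\mu_1}$ is positive on positives — this needs care and is one of the technical points to nail down. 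Once $\hat k = \hat k_t$, the potential $\varphi(\cdot,\hat k_t)$ is the \emph{safe} harmonic field whose Neumann data is strictly negative (repulsive) on $\partial\varepsilon_o$ and strictly positive (attractive) on $\partial\varepsilon_f$, by construction \eqref{TargetBoundaryValues}--\eqref{k1}. The key classical fact to invoke here: a harmonic function with such boundary flux has no interior critical points that are local minima, and in fact its only stationary behavior corresponds to flow \emph{from} the free boundary \emph{into} the occupied boundary — so if any free boundary $\partial\varepsilon_f$ remains, $\nabla_p\varphi\ne 0$ generically and the robot keeps moving toward it. The phrase ``almost any initial configuration'' covers the measure-zero set of saddle points of $\varphi$, from which the gradient flow may fail to escape.

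Then I would close the loop between "the robot keeps moving toward free boundary" and "finite-time complete exploration." The argument is: as long as $\varepsilon(t)\ne W$, there is nonempty free boundary $\partial\varepsilon_f$; the safe harmonic field channels the trajectory toward it with non-vanishing speed (away from the saddle set), so within finite time the robot comes within sensing range $r$ of a previously unexplored region, enlarging $\varepsilon$; since $W$ is compact, only finitely many such ``expansion events'' of a fixed minimal volume increment can occur, hence $\varepsilon(T)=W$ in finite time $T$. I would also need the bookkeeping observation that each time the map updates, Propositions~1--3 guarantee the new BVP is still well posed, bounded, and (with $b_e=0$ assumed, or after the finite transient of Proposition~3) safe, so the per-step analysis composes.

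The main obstacle I anticipate is the rigorous LaSalle/invariance step in a setting where the domain $\varepsilon$ itself changes with time (a hybrid/switched system), combined with ruling out the robot asymptotically parking at an interior critical point \emph{without} ever completing exploration — i.e. showing that critical points of the safe field are either non-existent in the interior or unstable (saddles), so that the "almost any initial configuration" qualifier genuinely suffices. Pinning down that $c\mu$ stays bounded away from zero on the relevant limit set, so that $\hat k$ actually reaches $\hat k_t$ rather than stalling with a stale unsafe field, is the delicate quantitative ingredient.
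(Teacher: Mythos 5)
Your opening step coincides with the paper's: using $\mu_1>9/8$ so that $S_{\mu_1}(x)<x$ for $x>0$, together with $c,s\le 1$, yields $\frac{d\varphi}{dt}\le -K_u s\|\nabla_p\varphi\|^2+c\mu\|\frac{\partial\varphi}{\partial\hat{k}}(\hat{k}_t-\hat{k})\|\le 0$, which with Proposition~4 gives convergence of $\varphi$ along the trajectory. After that, however, there is a genuine gap: you have inverted the role of $c$. In the paper, $c=S_{\epsilon_w}(\|\nabla_p\varphi-\epsilon_1\|)$ is designed precisely so that $\dot{\hat{k}}\to 0$ as the robot approaches a critical point of $\varphi$ (it vanishes once $\|\nabla_p\varphi\|$ drops to the order of $\epsilon_1$), i.e.\ the boundary data is \emph{frozen} there. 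Your argument needs the opposite --- that $c\mu$ stays bounded away from zero on the limit set so that $\hat{k}\to\hat{k}_t$ and you can then reason about the ``safe'' field $\varphi(\cdot,\hat{k}_t)$ --- and you yourself flag this as the delicate quantitative ingredient; by construction it fails. The paper's escape mechanism is different and does not need $\hat{k}=\hat{k}_t$: because $\hat{k}$ freezes near the critical point, the field is locally a fixed harmonic function, whose interior critical points are saddles (no interior extrema by the maximum principle, a fact that holds for any harmonic field, not just the target one), hence unstable for the negative-gradient flow; the ``almost any initial configuration'' clause excludes exactly the measure-zero stable manifolds of these saddles. Your detour through $\hat{k}\to\hat{k}_t$ is therefore both unobtainable under the actual control law \eqref{BoundaryValueControlLaw} and unnecessary.

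The second issue is the convergence machinery itself. You invoke a LaSalle-type invariance principle, which, as you note, is not available here because the system is non-autonomous (the domain $\varepsilon$ and the boundary data evolve in time). The paper sidesteps this with Barbalat's lemma: besides $\frac{d\varphi}{dt}\le 0$ and lower-boundedness of $\varphi$, it establishes uniform continuity of $\frac{d\varphi}{dt}$ by showing each term is locally Lipschitz (boundedness of $s$, real-analyticity and boundedness of $\nabla_p\varphi$ in $\varepsilon$, and an extreme/mean value theorem argument for the $\frac{\partial\varphi}{\partial\hat{k}}$-term). Your sketch omits any such uniform-continuity (or substitute) argument, so the conclusion $\frac{d\varphi}{dt}\to 0$ is not justified as written. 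On the other hand, your closing compactness argument --- finitely many sensing-range ``expansion events'' of bounded-below volume in the compact $W$ --- is a more explicit account of the finite-time claim than the paper's one-line conclusion, but it only helps once the two steps above are repaired.
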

\begin{proof}
Barbalat's lemma will be used to prove that $\frac{d\varphi}{dt}\rightarrow0$ as $t\rightarrow\infty$. The following conditions must be satisfied: 1) $\varphi$ has a finite limit as $t\rightarrow\infty$, and 2) $\frac{d\varphi}{dt}$ is uniformly continuous w.r.t. time. For the first condition we showed that $\varphi(p,\hat{k})$ is lower bounded and we need to show that $\frac{d\varphi}{dt}\leq0$, where:
\[\frac{d\varphi}{dt}=\nabla_p\varphi\cdot\frac{dp}{dt}+\frac{\partial\varphi}{\partial\hat{k}}\cdot\frac{d\hat{k}}{dt}\]
\begin{equation}
 \frac{d\varphi}{dt}\leq-K_us\parallel\nabla_p\varphi\parallel^2+\parallel\frac{\partial\varphi}{\partial\hat{k}}\cdot(\hat{k}_t-\hat{k})\parallel c\mu.\\
\label{NegDerivative}
\end{equation}
For $\mu_1>9/8$ it holds that:
$$\frac{K_us\parallel\nabla_p\varphi\parallel^2}{\parallel\frac{\partial\varphi}{\partial \hat{k}}(\hat{k}_t-\hat{k})\parallel+\varepsilon_2}\geq S_{\mu_1}\left(\frac{K_us\parallel\nabla_p\varphi\parallel^2}{\parallel\frac{\partial\varphi}{\partial \hat{k}}(\hat{k}_t-\hat{k})\parallel+\varepsilon_2}\right)$$
$$ K_us\parallel\nabla_p\varphi\parallel^2\geq (\parallel\frac{\partial\varphi}{\partial \hat{k}}(\hat{k}_t-\hat{k})\parallel+\varepsilon_2)\mu$$
$$ K_us\parallel\nabla_p\varphi\parallel^2\geq \parallel\frac{\partial\varphi}{\partial \hat{k}}(\hat{k}_t-\hat{k})\parallel c \mu$$
$$ 0\geq-K_us\parallel\nabla_p\varphi\parallel^2+\parallel\frac{\partial\varphi}{\partial \hat{k}}(\hat{k}_t-\hat{k})\parallel c \mu.$$
Next, we show that $\frac{d\varphi}{dt}$ is uniformly continuous w.r.t. time. It suffices to show that $\frac{d\varphi}{dt}$ is locally Lipschitz continuous since $\varphi$ has been proven bounded.
In the first term of $\frac{d\varphi}{dt}$ in \eqref{NegDerivative} , $s$ is locally Lipschitz continuous and bounded between 0 and 1, $\nabla_p\varphi$ is real analytic and thus locally Lipschitz and bounded in $\varepsilon$. Therefore, the first term of $\frac{d\varphi}{dt}$ is locally Lipschitz continuous as a product of locally Lipschitz continuous and bounded functions. Regarding the second term, it can be shown that it is continuous and has a piece-wise continuous first derivative, as a composition of such functions. Using the extreme value theorem, it can be shown that the derivative of the second term is bounded. Thus, using the mean value theorem it can be shown that the second term of $\frac{d\varphi}{dt}$ is locally Lipschitz continuous. Consequently, $\frac{d\varphi}{dt}$ is locally Lipschitz continuous.

Therefore the conditions of Barbalat's lemma are fulfilled and the derivative of $\varphi$ w.r.t. time converges to 0 as $t\rightarrow\infty$. By \eqref{NegDerivative} we see that $\frac{d\varphi}{dt}\rightarrow 0$ occurs in two cases: i) when the robot approaches the free boundary ($s\rightarrow0$ and $\mu\rightarrow0$) and ii) when $\parallel\nabla_p\varphi\parallel\rightarrow0$ ($c\rightarrow0$).
The second case, means that the robot converges to a critical point of $\varphi$ which for a harmonic function is an unstable saddle. Therefore, the robot will move towards the free boundary of $W$ and thus fully explore it in finite time.
\end{proof}
\section{Implementation}
\subsection{Boundary Extraction}
After obtaining the occupancy map of the explored workspace, we need to extract its boundary in form of a closed triangulated surface which is required by the BEM solver. In this implementation, we used 3D Delaunay triangulation \cite{DelaunayMeshGeneration} due to its computational efficiency and ability to produce the required closed surface. The implementation steps are given as follows:

\begin{algorithm} Boundary Extraction

\begin{enumerate}
\item Extract the centers of the cells in the free connected space that contains the robot.
\item Find the 3D Delaunay triangulation (tetrahedrization) of that set of points.
\item Remove the tetrahedra that do not correspond to the free space. (Due to the dense grid structure of the free space cell centers, we can remove the tetrahedra with volume and/or maximum edge length greater than a threshold, which are defined as $m_r^3/2$ and $\sqrt{3}m_r$, respectively.)
\item Extract the boundary facets and vertices of the remaining tetrahedrization, which form the closed surface mesh of the explored space’s boundary.
\end{enumerate}
\end{algorithm}

\subsection{Dead End Closing}
Given a generated map and trajectory, the goal of this algorithm is to remove the explored regions that the robot will not have to traverse again (i.e., areas with no unexplored free boundaries). To do this we need to identify boundary patches between the region to be removed and the remaining space. Once the region has been removed, the boundaries between the remaining space and the removed region are set to have an occupancy probability of 0.99, indicating that they are now considered occupied. For simplicity, the algorithm considers only one boundary patch at a time that separates the explored space into two distinct regions. In this implementation, we examine planar patches that intersect the trajectory and are perpendicular to it. Next, we provide the implementation steps of the algorithm and demonstrate the results from its application on the ''Maze" environment in Figure \ref{fig:DeadEndResults}. The figure shows the boundary of the explored space on the left and the boundary that is used in the Fast BEM solver after applying the dead end closing algorithm on the right.
\begin{figure}
    \centering
    \includegraphics[width=\linewidth]{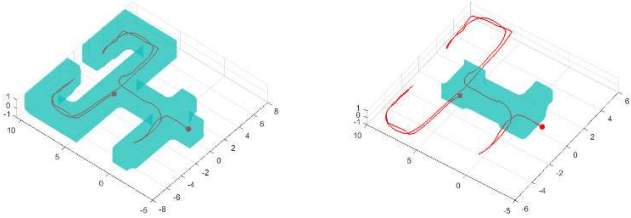}
    \caption{Removal of dead end areas.}
    \label{fig:DeadEndResults}
\end{figure}
\begin{algorithm} Dead-end Closing
\\\\For all trajectory points
\begin{enumerate}
    \item Find the planar patch that contains the trajectory point and is perpendicular to the trajectory.
    \item If it splits the explored space into exactly two spaces proceed to step 3 else proceed to the next trajectory point.
    \item If the space that does not contain the robot and does not have any unexplored free boundary then proceed to step 4 else proceed to the next trajectory point.
    \item Set the occupancy of the region that does not contain the robot and the planar patch to 0.50 and 0.99, respectively.
\end{enumerate}
\end{algorithm}

\section{Results}
In this section, we present the simulation results from our implementation. The sensing and occupancy grid mapping were simulated using Matlab's Navigation Toolbox. The environments were designed in CAD software and imported as mesh files. All aforementioned algorithms where implemented in Matlab 2020b, except the Laplace solver which is provided in \cite{FastBEMSoftware}. The controller parameter values that are used in the following simulations are shown in TABLE \ref{tab:ControllerParameters}. We test our algorithm in two different environments shown in Figure \ref{fig:Environments}. The first is a small corridor maze in order to demonstrate the performance of the dead end closing algorithm and the second is a more complex and larger environment representing a two story building with multiple rooms. 
\begin{figure}
\centering
\subfigure[Corridor Maze]{\includegraphics[width=0.3\linewidth]{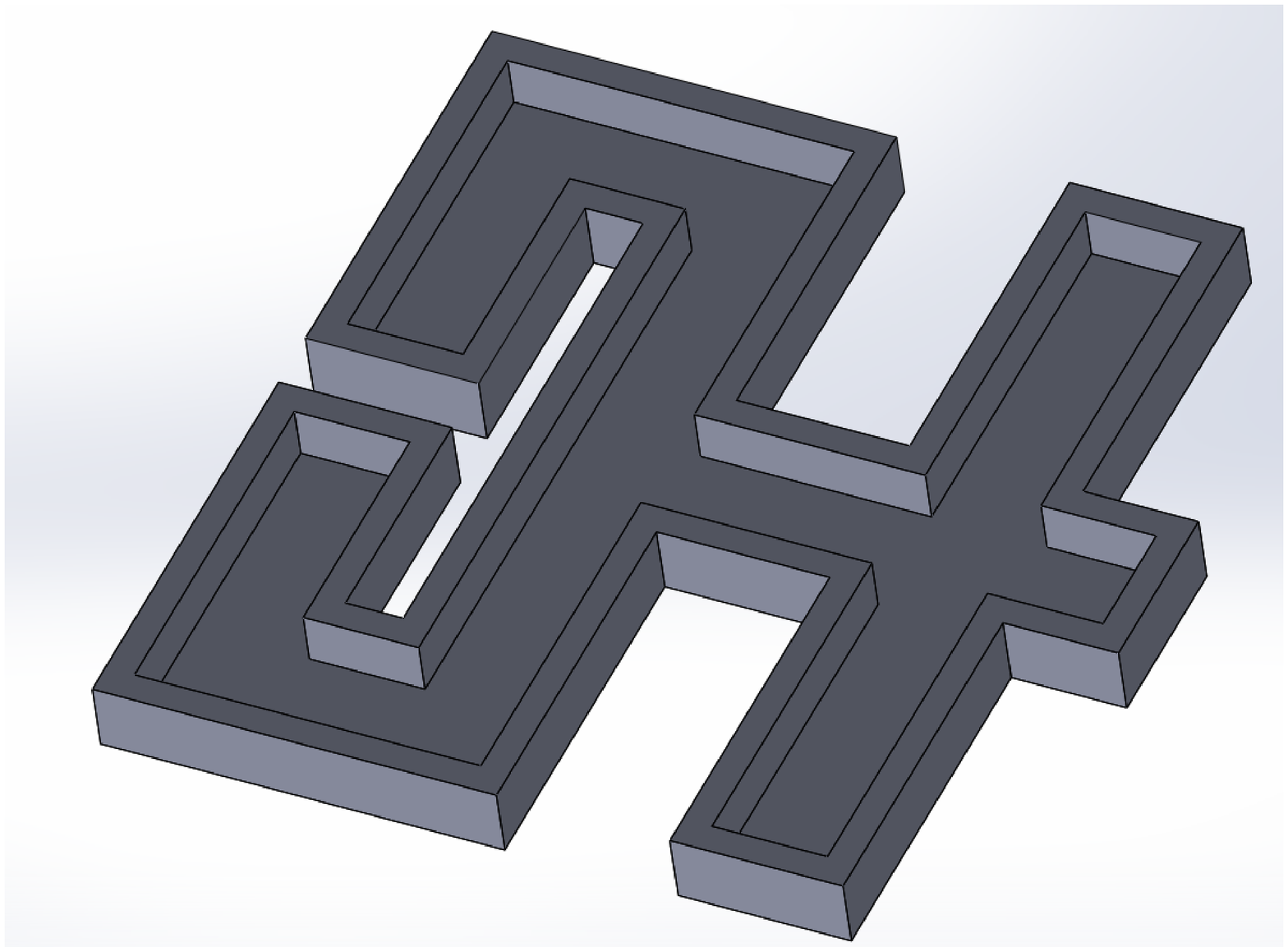}\label{fig:CorMazeEnv}}
\hfill
\subfigure[Two Story Upper Floor]{\includegraphics[width=0.3\linewidth]{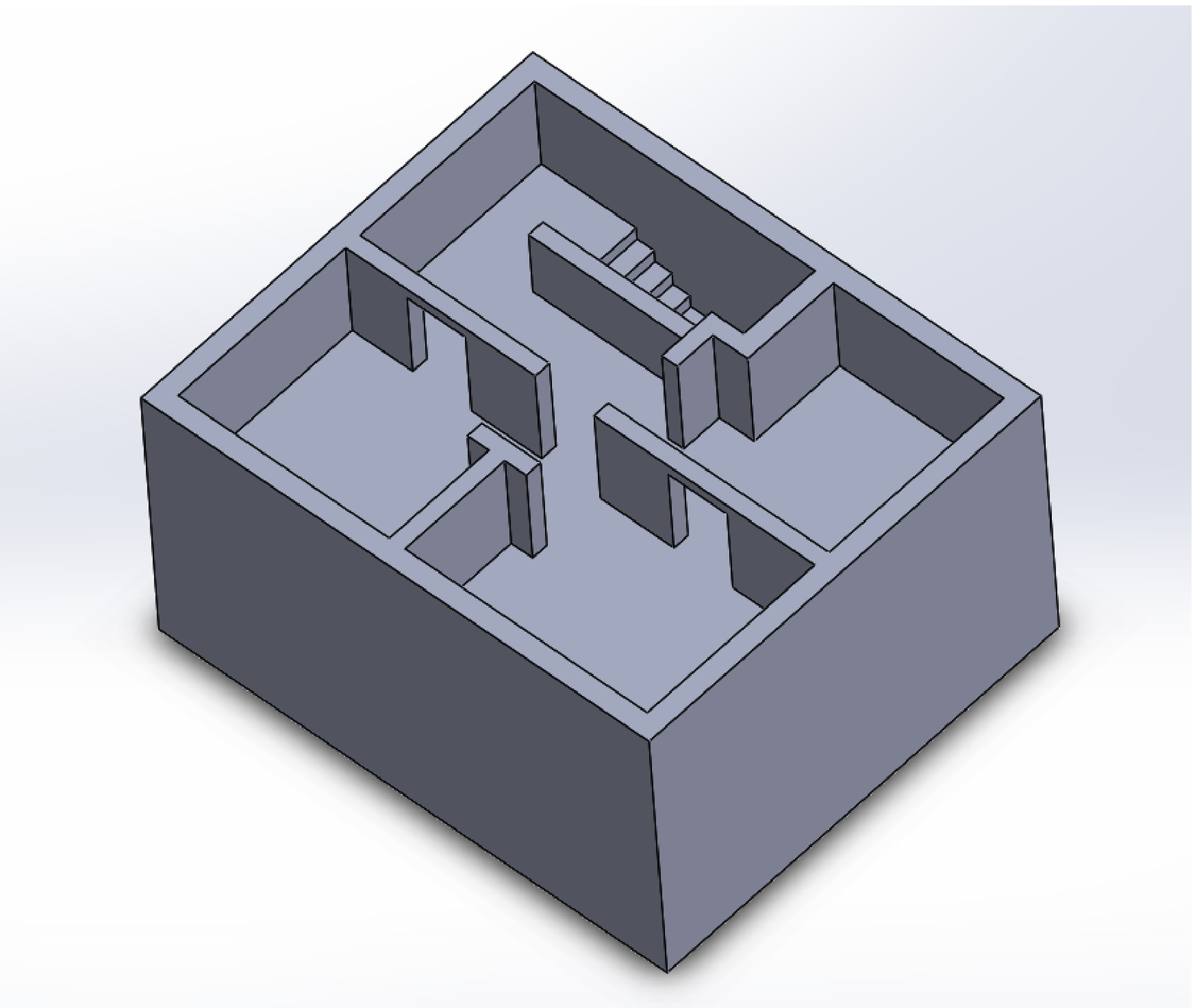}\label{fig:TwoStoryUFEnv}}
\hfill
\subfigure[Two Story Lower Floor]{\includegraphics[width=0.3\linewidth]{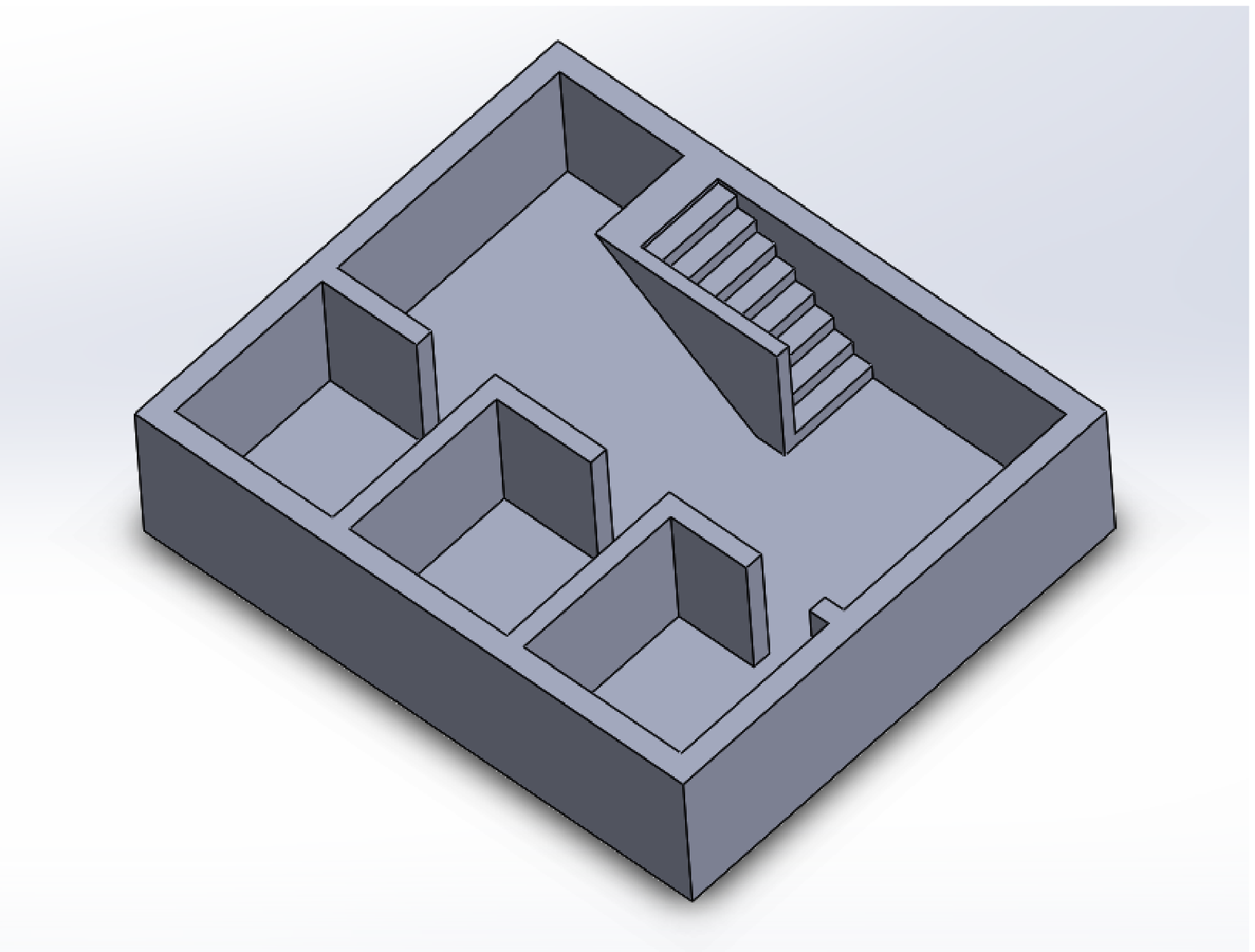}\label{fig:TwoStoryLFEnv}}
\caption{Environments}
\label{fig:Environments}
\end{figure}

\begin{table*}
    \centering
    \begin{tabular}{|c|c|c|c|c|}
         \hline
         Description & Sensing radius & OccMap Resolution & Probability Threshold & Velocity Gain\\
         \hline 
         Parameter & r (m) & $m_r$(cells/m) & $\Bar{a}$ &$K_u$\\
         \hline
         Value & 3 & 4 & 0.5 & 1 \\
         \hline
         \hline
         Description & parameter of s & Bounding Constant of k & parameters of $c,\mu$ & parameter of $\mu$\\
         \hline 
         Parameter  & $R_1(m)$ & $\Bar{k}$ & $\varepsilon_1,\varepsilon_w,\varepsilon_2$ & $\mu_1$  \\
         \hline
         Value & 0.2 & 1 & 0.01 & 10/8 \\
         \hline
    \end{tabular}
    \caption{Controller Parameters}
    \label{tab:ControllerParameters}
\end{table*}

\subsection{Generated Maps and Trajectories}
The generated maps and trajectories are shown in Figures \ref{fig:MapTrajectoryCorridorMaze} and \ref{fig:TwoStoryMapTrajectory} and the corresponding path lengths and exploration times in Table \ref{tab:PathLengthExpTime}. We see that the robot has fully explored the testing environments in reasonable time and maintains a safe distance from the explored space's boundaries. Regarding the Corridor Maze environment, we see that the dead end closing algorithm helped reduce the total exploration time by a factor of 1.6. As for the two story building environment the robot was able to revisit the upper and lower floor to explore remaining space. This validates the completeness of the proposed exploration scheme for large and complex environments. Regarding the performance of FastBEM, the computation of the unknown boundary values in \eqref{PotentialOnBoundary} was conducted every 10 iterations or when the robot approached close to the free boundary and rose linearly from 2 sec for 2500 elements to 27 sec for 27000 elements for the two story building. For the corridor maze those numbers where lower than 6 sec and lower than 2 sec when we applied the dead end closing algorithm. The maximum computation time of the gradient $\nabla_p\varphi$ from \eqref{PotetialInsideDomain} per step was 0.2 and 0.12 sec for the Two-story and the Corridor Maze, respectively. For the latter, the dead end closing algorithm reduced the gradient computation time below 0.04 seconds.

\begin{figure}
    \centering
    \subfigure[Without Dead-end Closing]{\includegraphics[width=0.45\linewidth]{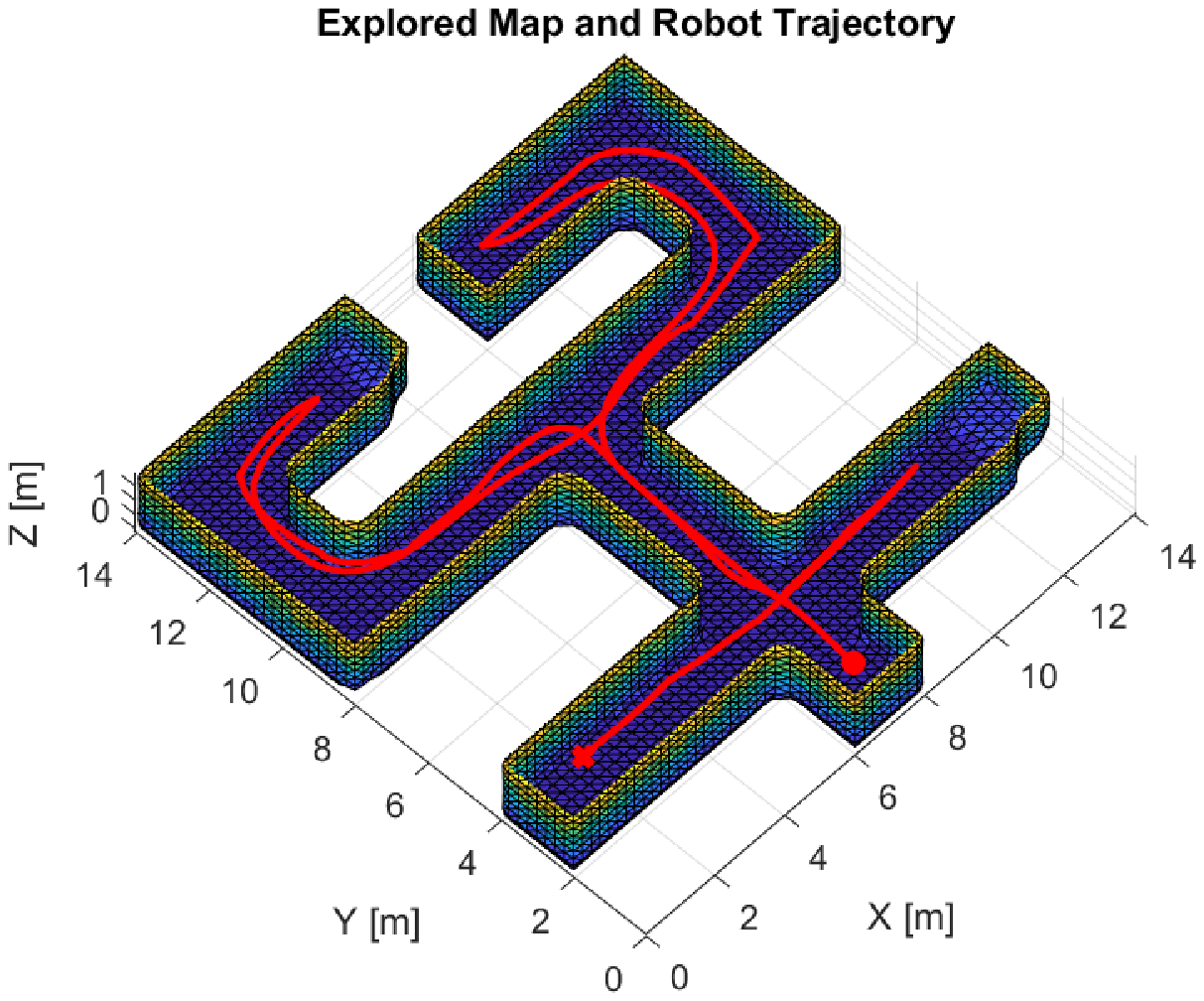}
    \label{fig:CorridorMazeWithoutDEC}}
    \subfigure[With Dead-end Closing]{\includegraphics[width=0.45\linewidth]{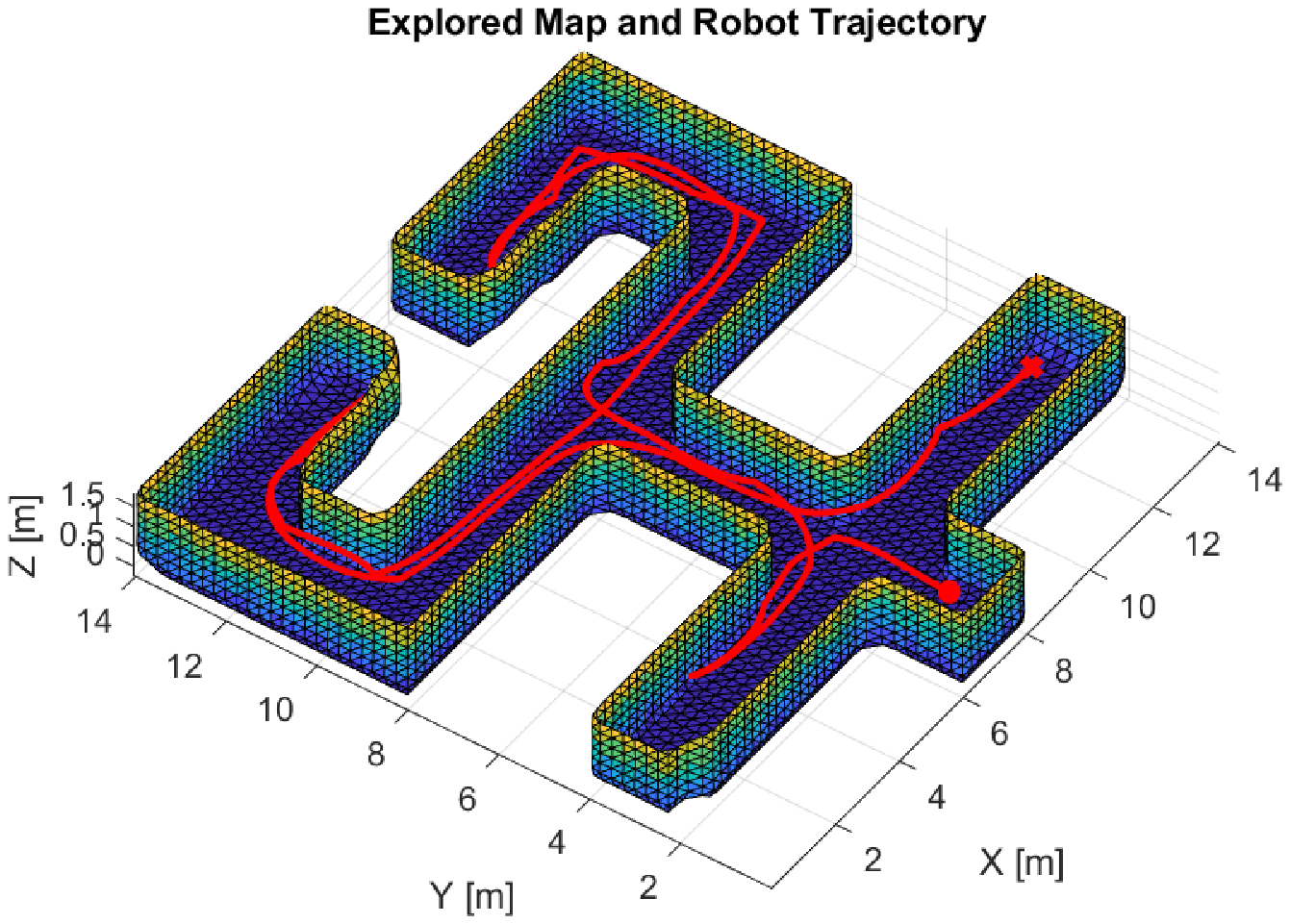}
    \label{fig:CorridorMazeWithDEC}}
    \caption{Corridor Maze: Maps and Trajectories}
    \label{fig:MapTrajectoryCorridorMaze}
\end{figure}

\begin{figure}
    \centering
\subfigure[Upper Floor]{\includegraphics[width=0.45\linewidth]{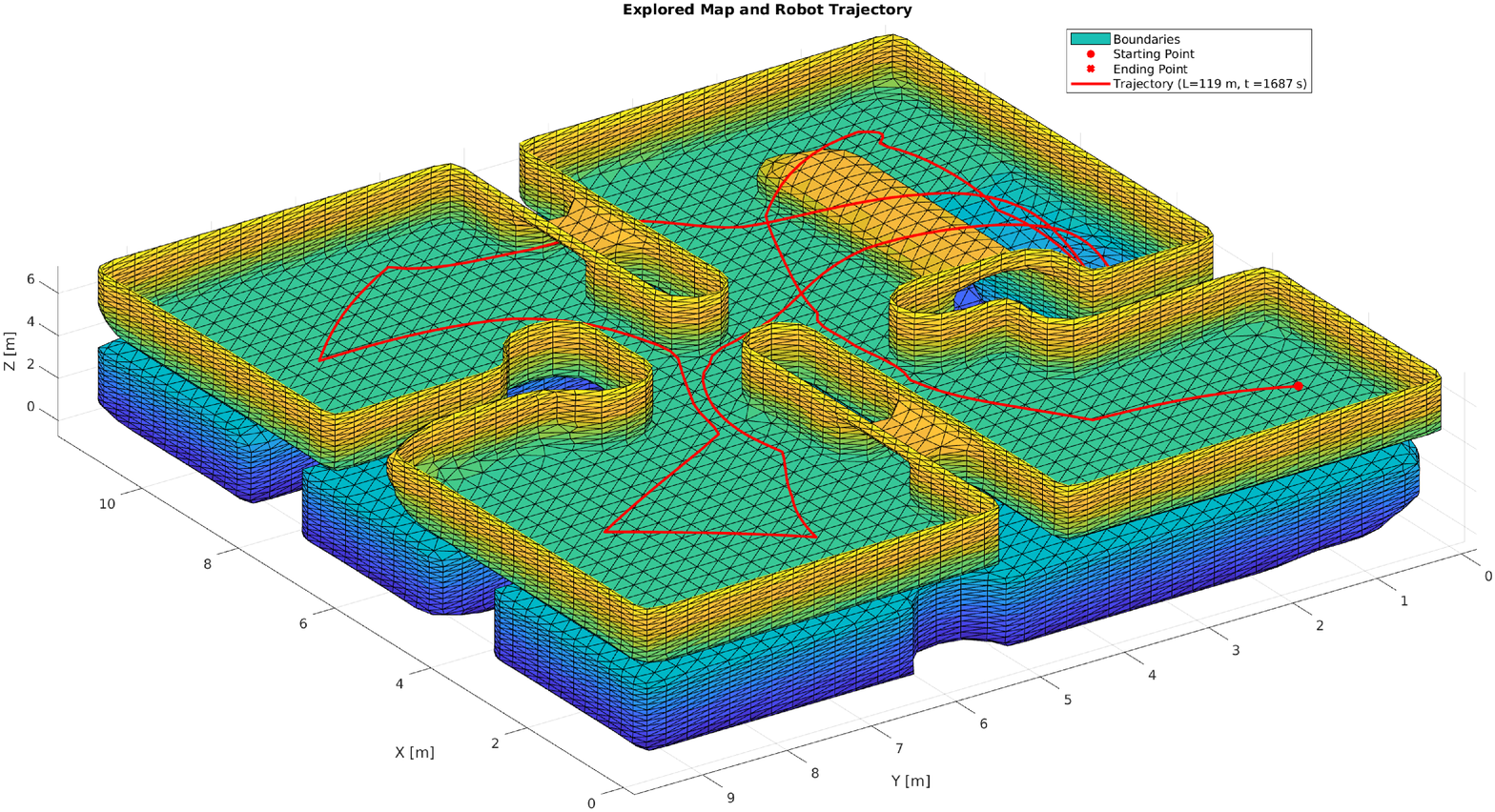}\label{fig:TwoStoryUpper}}
\subfigure[Lower Floor]{ \includegraphics[width=0.45\linewidth]{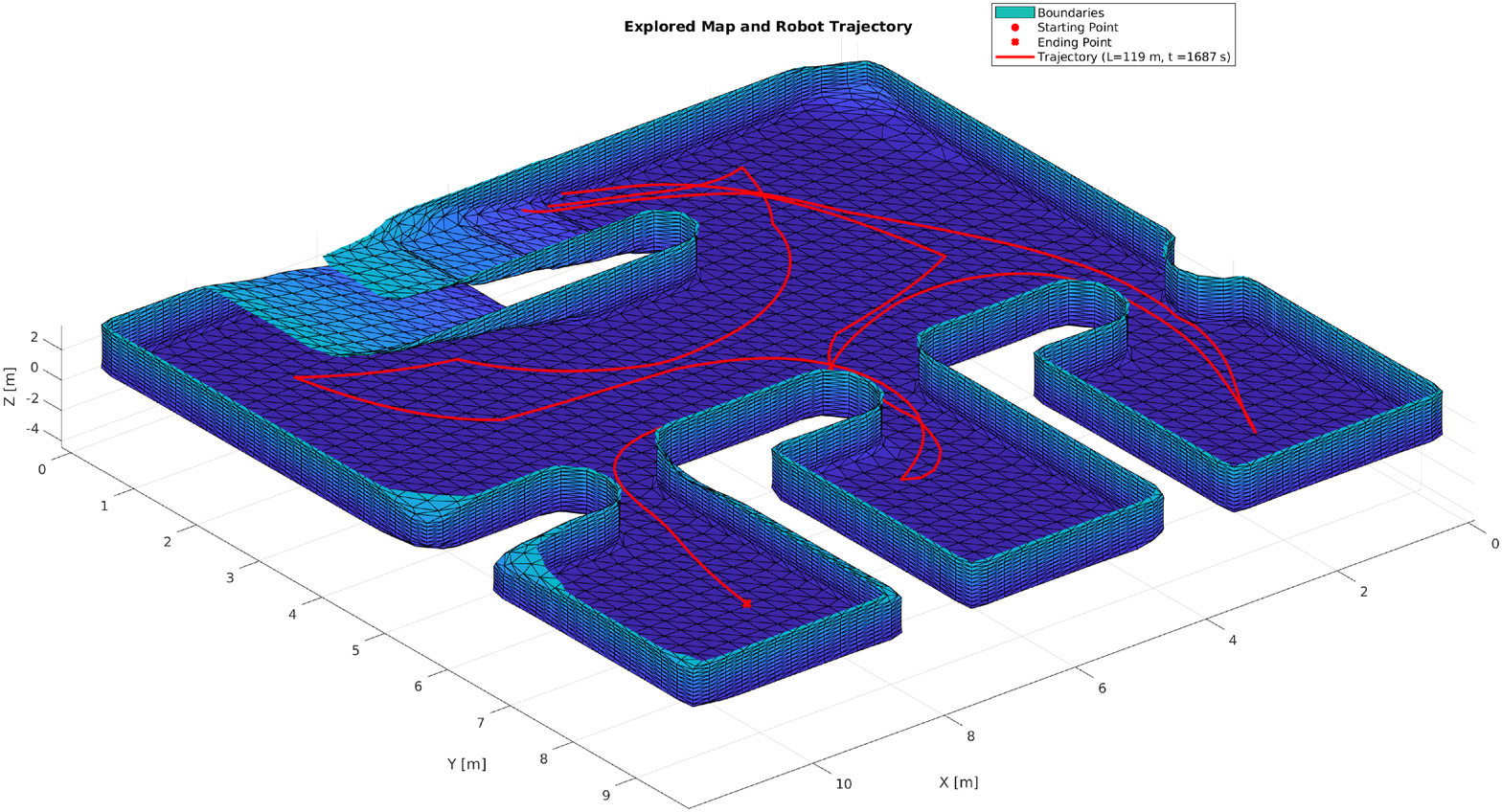}\label{fig:TwoStoryLower}}
    \caption{Two Story Building: Map and Trajectory}
    \label{fig:TwoStoryMapTrajectory}
\end{figure}

\begin{table}
    \centering
    \begin{tabular}{|c|c|c|c|}
         \hline
         Environment & Path Length (m) & Exploration Time (s)\\
         \hline Two-Story Building & 119 & 1687\\
         \hline
         Corridor Maze & 74 & 355\\
         \hline
         Corridor Maze * & 76 & 226 \\
         \hline
    \end{tabular}
    \begin{tabular}{c}
     * denotes the use of the dead-end closing algorithm \\   
    \end{tabular}
    \caption{Path Length and Exploration Time}
    \label{tab:PathLengthExpTime}
\end{table}

\subsection{Comparison}
In this subsection, we compare the performance of our algorithm with two other state-of-the-art algorithms: the receding horizon next best view planner (RH-NBVP) \cite{Bircher} and the frontier-based planner \cite{Gonzalez}, on the same indoor environment. In order to compensate for the narrower field of view used in \cite{Bircher}, which was $[60^\circ,90^\circ]$ with a depth of 5 meters, we used a smaller sensing radius of 3 meters instead of 5 meters. Additionally, we lowered our map's resolution to 3 cells/m to ensure compatibility with the RH-NBVP's map resolution of 2.5 cells/m. The generated maps and trajectories for our algorithm are shown in Figure \ref{fig:ApartmentOurs} for two different starting positions, while those for the RH-NBVP and the frontier-based planner are shown in Figure \ref{fig:ApartmentNBVPandfrontier}. Regarding the exploration time, the overall time of the RH-NBVP was 501 seconds, the frontier-based planner was 470 seconds (on average), and our algorithm was faster than the other two algorithms, with 451 seconds and 371 seconds respectively for the two initial points. All approaches fully explored the environment, but the trajectories generated by our algorithm are smoother (smaller radius of curvature more than 0.45m), maintain a safe distance from the walls more than 0.35m and transverse every corridor of the environment. In contrast, the trajectories generated by the other approaches contain sharp turns, approach close to the walls (the minimum distance reached 0.05m) and do not extend evenly to all the corridors of the environment.

\begin{figure}
    \centering
    \includegraphics[width=0.95\linewidth]{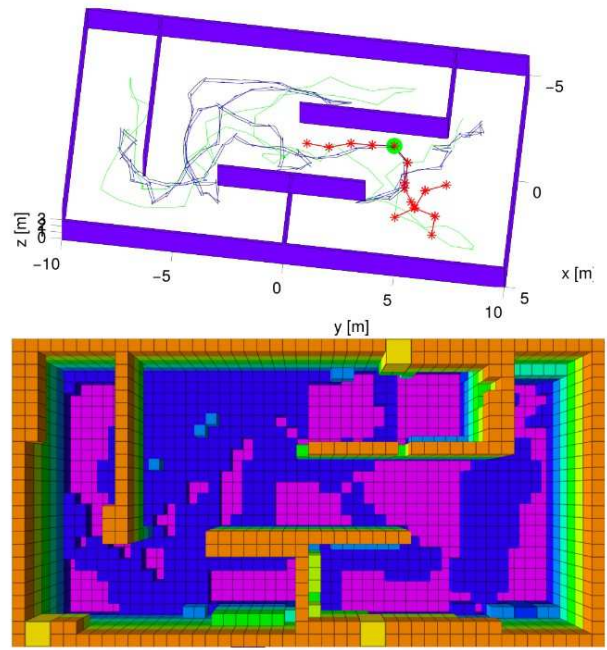}
    \caption{Apartment: Map and Trajectories generated by RH-NBVP (blue path) and the frontier based algorithm (green path). Figure extracted by \cite{Bircher}.}
    \label{fig:ApartmentNBVPandfrontier}
\end{figure}

\begin{figure}
    \centering
    \subfigure{\includegraphics[width=0.45\linewidth]{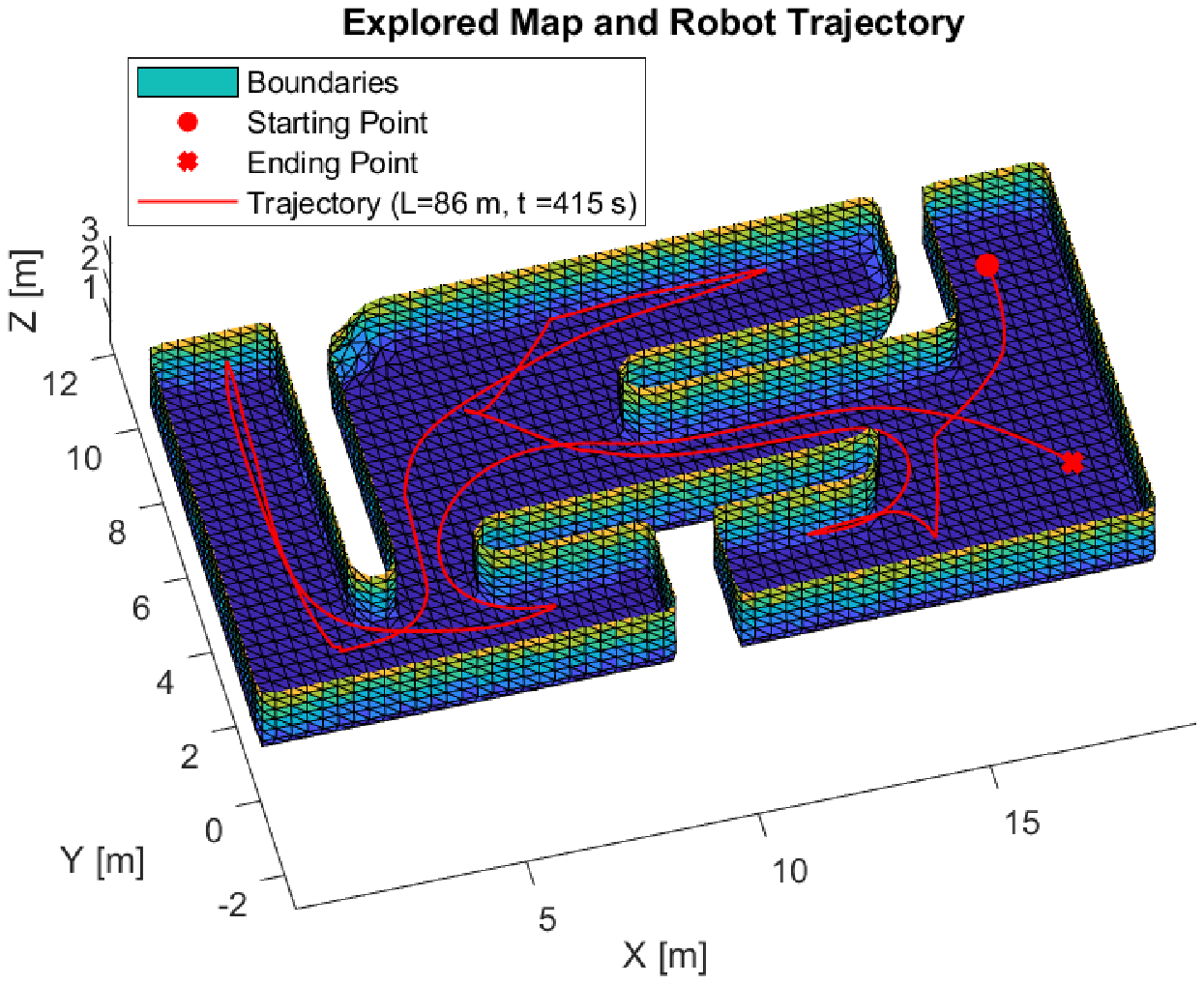}}
    \subfigure{\includegraphics[width=0.45\linewidth]{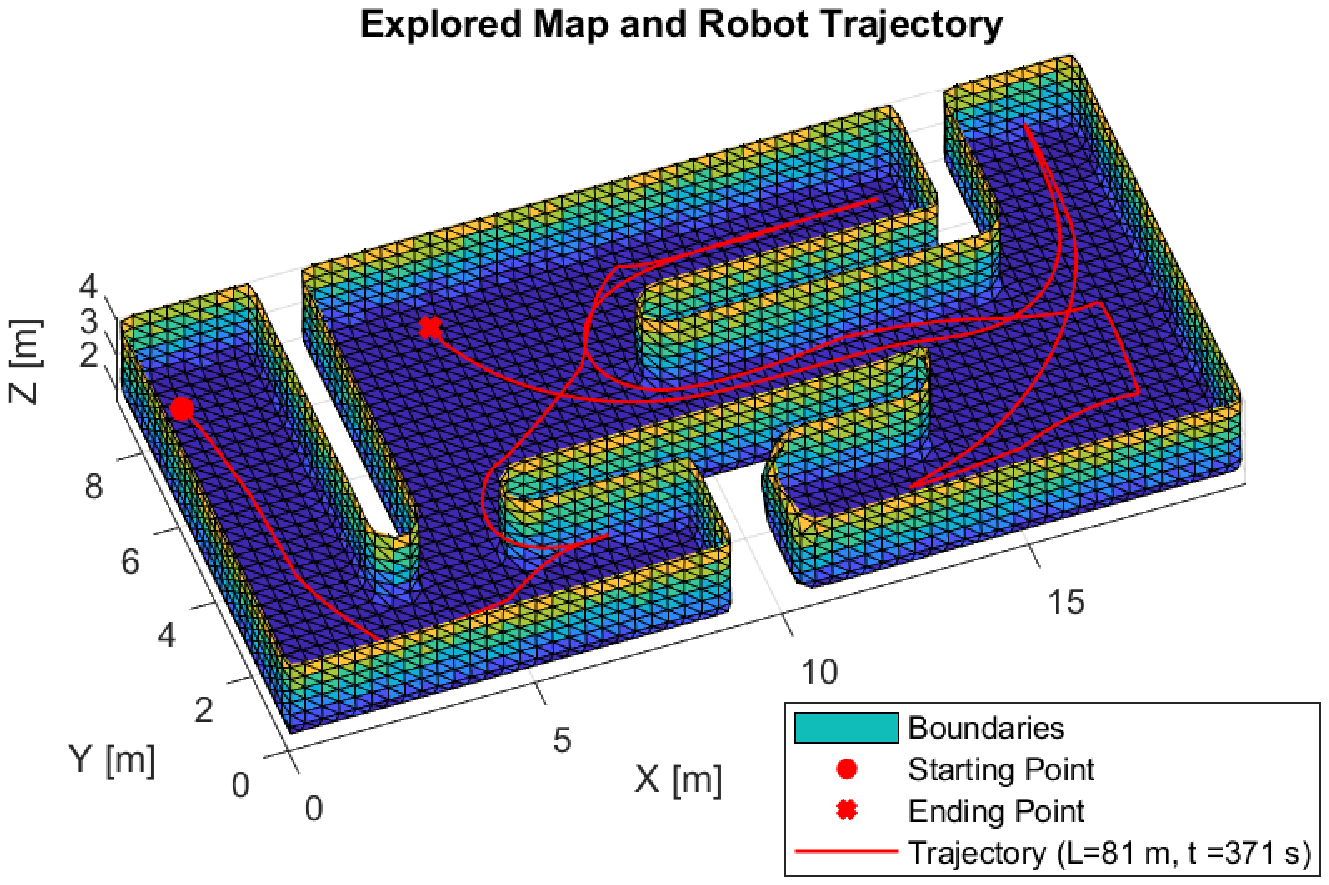}}
    \caption{Apartment: Maps and Trajectories generated by our algorithm for two different initial positions}
    \label{fig:ApartmentOurs}
\end{figure}

\section{Conclusions}
In summary, this work presents a safe, smooth, efficient and provably complete method for 3D exploration of complex indoor environments. Furthermore, we proposed two algorithms for robust boundary extraction and for reduction of the boundary's size by removing explored dead ends automatically. The aforementioned properties were validated by simulation results in various types of environments. In comparison to two existing approaches, we showed that the proposed scheme outperforms them in terms of safety, trajectory smoothness and exploration time in a typical indoor environment. For future research, we aim at applying a faster GPU-based version of FMBEM in real experiments and extending the methodology to outdoor environments for tasks like inspection of structures.

\bibliography{References}
\bibliographystyle{ieeetran}

\end{document}